\newtheorem{definition}{Definition}
\newtheorem{theorem}{Theorem}
\begin{document}

\title{Deceptive AI Explanations - Creation and Detection}

\author{\authorname{Johannes Schneider\sup{1}, Christian Meske\sup{2} and Michalis Vlachos\sup{3}}
\affiliation{\sup{1} University of Liechtenstein, Vaduz, Liechtenstein}
\affiliation{\sup{2} University of Bochum, Bochum, Germany}
\affiliation{\sup{3} University of Lausanne, Lausanne, Switzerland}
\email{johannes.schneider@uni.li, christian.meske@ruhr-uni-bochum.de, michalis.vlachos@unil.ch}
}

\keywords{Explainability, Artificial intelligence, Deception, Detection}

\abstract{ 
Artificial intelligence (AI) comes with great opportunities but can also pose significant risks. Automatically generated explanations for decisions can increase transparency and foster trust, especially for systems based on automated predictions by AI models. However, given, e.g., economic incentives to create dishonest AI, to what extent can we trust explanations? To address this issue, our work investigates how AI models (i.e., deep learning, and existing instruments to increase transparency regarding AI decisions) can be used to create and detect deceptive explanations. As an empirical evaluation, we focus on text classification and alter the explanations generated by GradCAM, a well-established explanation technique in neural networks. Then, we evaluate the effect of deceptive explanations on users in an experiment with 200 participants. Our findings confirm that deceptive explanations can indeed fool humans. However, one can deploy machine learning (ML) methods to detect seemingly minor deception attempts with accuracy exceeding 80\% given sufficient domain knowledge. Without domain knowledge, one can still infer inconsistencies in the explanations in an unsupervised manner, given basic knowledge of the predictive model under scrutiny.
}
\onecolumn \maketitle \normalsize \setcounter{footnote}{0} \vfill

\section{\uppercase{Introduction}}
AI can be used to increase wealth and well-being globally. However, the potential uses of AI cause concerns. For example, because of the limited moderation of online content, attempts at deception proliferate. Online media struggle against the plague of ``fake news", and e-commerce sites spend considerable effort in detecting deceptive product reviews (see \cite{wu2020} for a survey). Marketing strategies exist that consider the creation of fake reviews to make products appear better or to provide false claims about product quality \cite{adelani2019generating}.

There are multiple reasons why to provide ``altered" explanations of a predictive system. Truthful explanations might allow to re-engineer the logic of the AI system, i.e., leak intellectual property. Decision-makers might also deviate from suggested AI decisions at will. For example, a bank employee might deny a loan to a person she dislikes claiming an AI model's recommendation as to the reason, supported by a made-up explanation (irrespective of the actual recommendation of the system). AI systems may perform better when using information that should not be used but is available. For example, private information on a person's health condition might be used by insurances to admit or deny applicants. Even though this is forbidden in some countries, the information is still very valuable in estimating expected costs of the applicant if admitted. Product suggestions delivered through recommender systems are also commonly accompanied by explanations~\cite{fusco2019reconet} in the hope of increasing the likelihood of sales. Companies have an incentive to provide explanations that lure customers into sales irrespective of their truthfulness. As such, there are incentives to build systems that utilize such information but hide its use. That is, ``illegal" decision criteria are used, but they are omitted from explanations requested by authorities or even citizens. In Europe, the GDPR law grants rights to individuals to get explanations of decisions made in an automated manner. 

The paper contributes in the area of empirical and formal analyses of deceptive AI explanations. Our empirical analysis, including a user study, shows in alignment with prior work that deceptive AI explanations can mislead people. Our formal analysis sets forth some generic conditions under which detection of deceptive explanations is possible. We show that domain knowledge is required to detect certain forms of deception that might not be available to explainees (the recipients of explanations). Our supervised and unsupervised detection marks one of the first steps in the quest against deceptive explanations. They highlight that while detecting deception is often possible, success depends on multiple factors such as type of deception, availability of domain knowledge and basic knowledge of the deceptive system.

\section{\uppercase{Problem Definition}}
We consider classification systems that are trained using a labeled dataset $\mathcal{D}=\{(X,Y)\}$ with two sources of deception: model decisions and explanations. A model $M$ maps input $X \in S$ to an output $Y$, where $S$ is the set of all possible inputs. To measure the level of deception, we introduce a reference (machine learning (ML)) model $M^*$ and a reference explanation method $H^*$. In practice, $M^*$ might be a deep learning model and $H^*$ a commonly used explainability method such as LIME or SHAP. That is, $H^*$ might not be perfect. Still, we assume that the explainee trusts it, i.e. she understands its behavior and in what ways explanations differ from "human" reasoning. The model $M^*$ is optimized with a benign objective, i.e. maximizing accuracy. We assume that $M^*$ is not optimized to be deceptive. However, model $M^*$ might not be fair and behave unethically. A deceiver might pursue other objectives than those used for $M^*$ leading to the deceiver's model $M^D$. The model $M^D$ might simply alter a few decisions of $M^*$ using simple rules or it might be a completely different model. 
A (truthful) explainability method $H(X,Y,M)$ receives input $X$, class label $Y$ and model $M$ to output an explanation. For the reference explanation method $H^*$, this conforms to provide a best-effort, ideally a truthful, reasoning, why model $M$ would output class $Y$. The deceiver's method $H^D$ might deviate from $H^*$ using arbitrary information. It returns $H^D(X)$, where the exact deception procedure is defined in context.
An explainee (the recipient of an explanation) obtains for an input $X$, a decision $M^D(X)$ and an explanation $H^D(X)$. The decision is allegedly from $M^*$ and the explanation allegedly from $H^*$ and truthful to the model $M^D$ providing the decision. Thus, an explainee should be lured into believing that $M^*(X)=M^D(X)$ and $H^D(X)=H^*(X,M^D(X),M^D)$. However, the deceiver's model might not output $M^D(X)=M^*(X)$ and a deceiver might choose an explainability method $H^D$ that differs from $H^*$ or she might explain a different class $Y$. This leads to four scenarios (see Figure \ref{fig:scenarios}). We write $H^*(X):=H^*(X,M^D(X),M^D)$.

The goal of a deceiver is to construct an explanation so that the explainee is neither suspicious about the decision in case it is not truthful to the model $M^D$, ie. $M^D(X)\neq M^*(X)$, nor about the explanation $H^D(X)$ if it deviates from  $H^*(X,M^D(X),M^*)$. Thus, an explanation might be used to hide an unfaithful decision to the model or it might be used to convey a different decision-making process than occurs in $M^D$. 

\begin{figure*}
  \centering
  \includegraphics[width=0.75\linewidth]{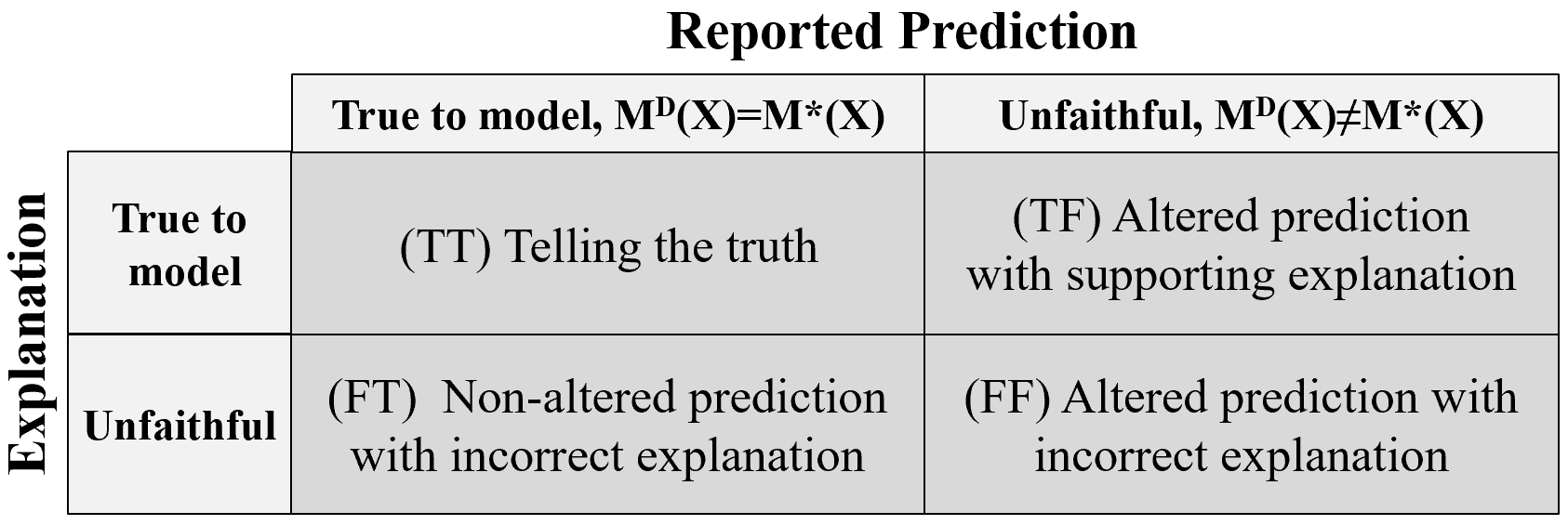}
  \caption{Scenarios for reported predictions and explanations}
  \label{fig:scenarios}
\end{figure*}

An input $X$ consists of values for $n$ features, $\mathcal{F}=\{i|i=1\ldots n\}$, where each feature $i$ has a single value $x_i \in V_i$ of a set of feasible values $V_i$. For example, an input $X$ can be a text document such as a job application, where each feature $i$ is a word specified by a word id $x_i$. Documents $X \in S$ are extended or cut to a fixed length $n$. ML models learn (a hierarchy of) features. Explaining in terms of learnt features is challenging since they are not easily mapped to unique concepts that are humanly understandable. Thus, we focus on explanations that assign \textit{relevance scores} to features $\mathcal{F}$ of an input $X$. Formally, we consider explanations $H$ that output a value $H_i(X,Y,M)$ for each feature $i \in F$. Where $H_i>0$ implies that feature $i$ with value $x_i$ is supportive of decision $Y$. A value of zero implies no dependence of $i$ on the decision $Y$. $H_i<0$ shows that feature $i$ is indicative of another decision.

\section{\uppercase{Measuring Explanation Faithfulness}} \label{sec:expfaith}
We measure faithfulness of an explanation using two metrics, namely \textit{decision fidelity} and \textit{explanation fidelity}.
\paragraph*{Decision fidelity.} It amounts to the standard notion of quantifying whether input $X$ and explanation $H^D(X)$ on their own allow deriving the correct decision $Y=M^*(X)$~\cite{schneider2019pers}. Therefore, if explanations indicate multiple outputs or an output different from $Y$, this is hardly possible. Decision fidelity $f_D$ can be defined as the loss when predicting the outcome using some classifier $g$ based on the explanation only, or formally:
\begin{small} 
\begin{equation} \label{def:decfid}
  f_D(X)=-L(g(X,H^D(X)),Y)
\end{equation}
\end{small}
The loss might be defined as 0 if $g(X,H^D(X))=Y$ and 1 otherwise. We assume that the reference explanations $H^*(X,M^*(X),M^*)$ results in minimum loss, i.e., maximum decision fidelity. (Large) decision fidelity does not require that an explanation contains all relevant features used to derive the decision $M^D(X)$. For example, in a hiring process, gender might influence the decision, but for a particular candidate other factors, such as qualification, social skills etc., are dominant and on their own unquestionably lead to a hiring decision.

\paragraph*{Explanation fidelity.} This refers to the overlap of the (potentially deceptive) explanation $H^D(X)$ and the reference explanation $H^*(X,M^D(X),M^D)$ for an input $X$ and reported decision $M^D(X)$. Any mismatch of a feature in the two explanations lowers explanation fidelity. It is defined as:
\begin{small}
\begin{equation}
  f_O(X)=1-\frac{\|H^*(X,M^D(X),M^D)-H^D(X)\|}{\|H^*(X,M^D(X),M^D)\|}
\end{equation}
\end{small}
Even if the decision $M^D(X)$ is non-truthful to the model, i.e., $M^D(X)\neq M^*(X)$, explanation fidelity might be large if the explanation correctly outputs the reasoning that would lead to the reported decision. If the reported decision is truthful, i.e., $M^D(X)= M^*(X)$, there seems to be an obvious correlation between decision- and explanation fidelity. But any arbitrarily small deviation of explanation fidelity from the maximum of 1 does not necessarily ensure large decision fidelity and vice versa. For example, assume that an explanation from $H^D$ systematically under- or overstates the relevance of features, i.e. $H^D(X)_i=H^*(X)_i\cdot c_i$ with arbitrary $c_i>0$ and $c_i\neq 1$. For $c_i$ differing significantly from 1, this leads to explanations that are far from the truth, which is captured by low explanation fidelity. However, decision fidelity might yield the opposite picture, i.e., maximum decision fidelity, since a classifier $g$ (Def. \ref{def:decfid}) trained on inputs $(X,H^D(X))$ with labels $M^D(X)$, might learn the coefficients $c_i$ and predict labels without errors.

\begin{figure}
  \centering
  \includegraphics[width=1.03\linewidth]{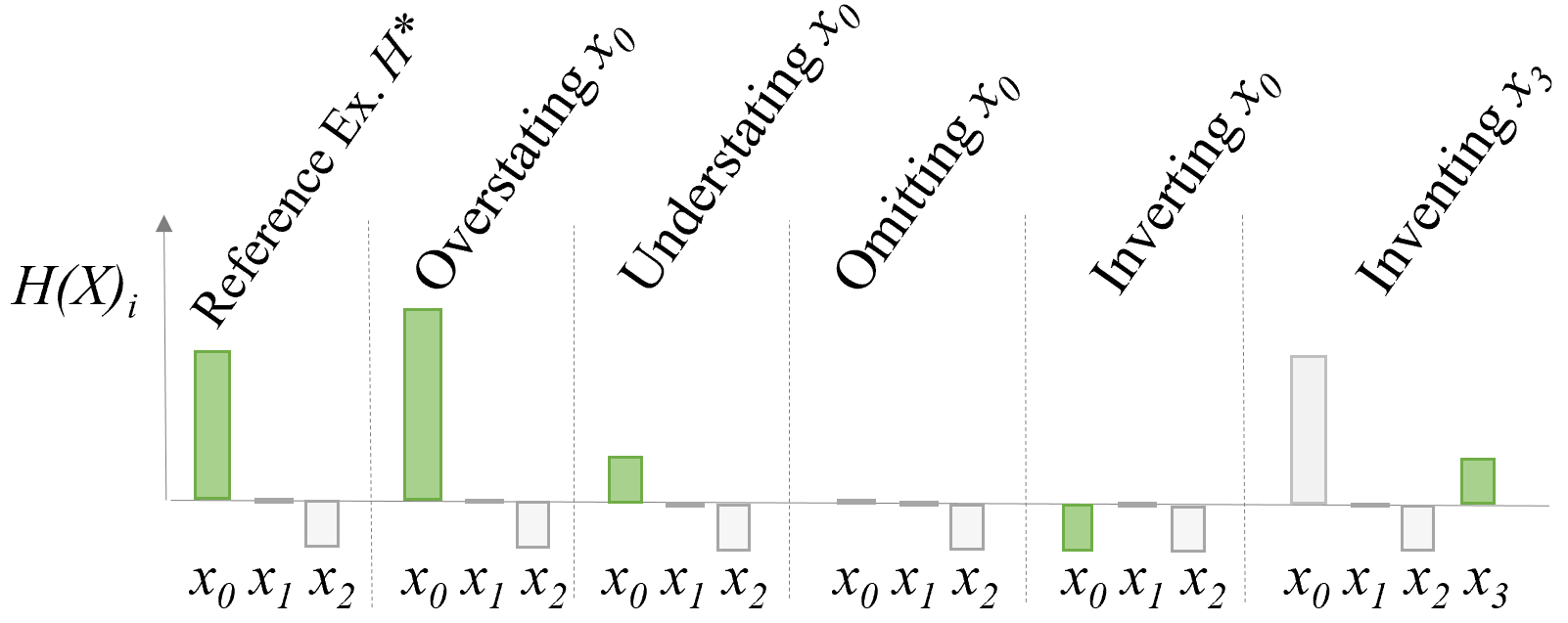}
  \caption{Deviations from (trusted) reference explanation}
  \label{fig:howtolie}
\end{figure}

Explanation fidelity captures the degree of deceptiveness of explanations from $H^D$ by aggregating the differences of its relevances of features and those of the reference explanations. When looking at individual features from a layperson's perspective, deception can arise due to over- and understating the feature's relevance or even fabricating features (see Figure~\ref{fig:howtolie}). Omission and inverting of features can be viewed as special cases of over- and understating. In this work, we do not consider feature fabrication.

\section{\uppercase{Creation of Deceptive Explanations}}
We first discuss goals a deceiver might pursue using deceptive explanations, followed by how deceptive explanations can be created using these goals in mind.
\paragraph*{Purposes of Deceptive Explanation} include:\\
i)  Convincing the explainee of an incorrect prediction, i.e. that a model decided $Y$ for input $X$ although the model's output is $M^D(X)$ with $Y\neq M^D(X)$. For example, a model $M^*$ in health-care might predict the best treatment for a patient trained on historical data $\mathcal{D}$. A doctor might change the prediction. She might provide the best treatment for well-paying (privately insured) patients and choose a treatment that minimizes her effort and costs for other patients. \\
ii) Providing an explanation that does not accurately capture model behavior without creating suspicion. An incorrect explanation will manifest in low decision fidelity and explanation fidelity. It involves hiding or overstating the importance of features in the decision process (Figure~\ref{fig:howtolie}) with more holistic goals such as:\\
  a) Omission: Hiding that decisions are made based on specific attributes such as gender or race to prevent legal consequences or a loss in reputation.\\
  b) Obfuscation: Hiding the decision mechanism of the algorithm to protect intellectual property.\\
The combination of (i) and (ii) leads to the four scenarios shown in Figure~\ref{fig:scenarios}.

\begin{figure*}
  \centering
  \includegraphics[width=0.65\linewidth]{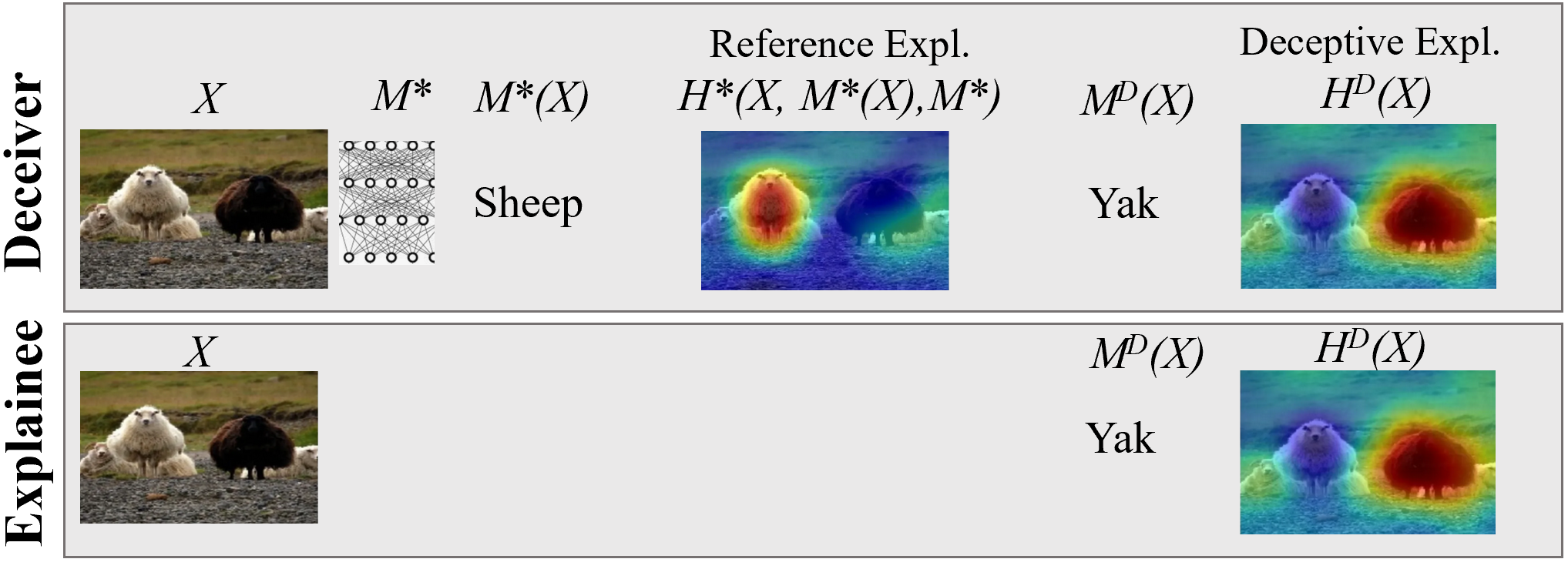}
  \caption{Inputs and outputs for deceiver and explainee for scenario FT in Figure~\protect\ref{fig:scenarios}. \tiny{Images by \protect\cite{pet18}.}} 
  \label{fig:ftexamples}
\end{figure*}

\paragraph*{Creation:}
To construct deceptive explanations (and decisions), a deceiver has access to the model $M^*$ and $M^D$, the input $X$ and the reference explanation $H^*$. She outputs a decision $M^D(X)$ in combination with an explanation $H^D(X)$ (see Figure~\ref{fig:ftexamples}). Deceptive explanations are constructed to maximize the explainee's credence of decisions and explanations. We assume that an explainee is most confident that the reference explanation $H^*(X,Y,M^D)$ and the model-based decision $Y=M^*(X)$ are correct. This encodes the assumption that the truth is most intuitive since any deception must contain some reason that can be identified as faulty. 

We provide simple means for creating deceptive explanations that are non-truthful explanations (FT and FF). The idea is to alter reference explanations. This approach is significantly simpler than creating deceptive explanations from scratch using complex algorithms as done in other works \cite{aiv19,lakkaraju2019fool,adelani2019generating}, while at the same time guaranteeing high-quality deceptive explanations since they are based on what the explainee expects as a valid explanation. For non-truthful explanations a deceiver aims at over-, understating or omitting features $X'\subseteq X$ that are problem or instance-specific. To obtain non-truthful explanations we alter reference explanations in two ways:

\begin{definition}[Omission] \label{def:omit} 
Remove a fixed set of values $\mathcal{V}$ so that no feature $i$ has a value $x_i \in \mathcal{V}$ as follows:
\begin{small}
\begin{equation}
  H_{Omit}(X)_i:=
  \begin{cases}
    0, & \text{if } x_i \in \mathcal{V}. \\
    H^*(X)_i, & \text{otherwise}.
  \end{cases}
\end{equation}
\end{small}
\end{definition}
In our context, this means denying the relevance of some words $\mathcal{V}$ related to concepts such as gender or race. The next alteration distorts relevance scores of all features, eg. to prevent re-engineering through obfuscation.
\begin{definition}[Noise addition] \label{def:noi}
Add noise in a multiplicative manner for any explanation $H^*(X)$:
\begin{small}
\begin{equation}
  H_{Noise}(X)_i:=H^*(X)_i\cdot (1+r_{i,X}),
\end{equation}
\end{small}
where $r_{i,X}$ is chosen uniformly at random in $[-k,k]$ for a parameter $k$ for each feature $i$ and input $X \in S$.
\end{definition}
We assume that these alterations are applied consistently for all outputs. Note, that this does not imply that all explanations are indeed non-truthful, e.g., for noise it might be that by chance explanations are not altered or only very little, for omission it might be that a feature is not relevant in the decision for a particular input $X$, i.e. the value of a feature $H^*(X)_i$ is zero anyway.

\section{\uppercase{Deception Detection}} \label{sec:emp} 

To detect deception attempts, we reason using explanations and decisions of multiple inputs. That is, for a set of inputs $X \in S^D$, we are given for each input $X$ the reported decision $M^D(X)$ and accompanying explanation $H^D(X)$. 
Our goal is to identify if a model outputs deceptive explanations or not. For supervised learning, we (even) aim to identify the inputs yielding deceptive outputs.
We assume that only features that are claimed to contribute positively to a decision are included in explanations. Features that are claimed to be irrelevant or even supporting of another decision outcome are ignored. The motivation is that we aim at explanations that are as simple to understand as possible. The omission of negatively contribution features makes detection harder. We first provide theoretical insights before looking into practical detection approaches.

\paragraph{Formal investigation:} Ideally, any of the three types of deception $\{TF,FT,FF\}$ is detected using only one or more inputs $X \in S^D$ and their responses $M^D(X)$ and $H^D(X)$ (see Figure~\ref{fig:ftexamples}). But, without additional domain knowledge (such as correctly labeled samples), metadata or context information, this is not impossible for all deception attempts. This follows since data, such as class labels, bear no meaning on their own. Thus, any form of "consistent" lying is successful, eg. always claiming that a cat is a dog (using explanations for class dog) and a dog is a cat (using explanations for class cat) is non-detectable for anybody lacking knowledge on cats and dogs, i.e., knowing what a cat or a dog is.

\begin{theorem} \label{thm:det}
There exist non-truthful reported decisions $M^D(X)\neq M^*(X)$ that cannot be identified as non-truthful.
\end{theorem}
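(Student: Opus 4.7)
The plan is a direct existence proof by exhibiting a consistent ``label-swap'' deception whose observed outputs are information-theoretically indistinguishable from those of a truthful system, formalizing the cat/dog example that immediately precedes the theorem.

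First, I would fix any task on which $M^*$ takes at least two distinct values over $S^D$ and pick a non-trivial permutation $\pi$ of the label set. Define the deceiver's model $M^D := \pi \circ M^*$ (that is, $M^D$ runs $M^*$ internally and then relabels its output), and for every $X$ let the reported explanation be $H^D(X) := H^*(X, M^D(X), M^D)$. This is exactly the form of output the explainee expects (cf.\ the setup $H^D(X)=H^*(X,M^D(X),M^D)$ in the problem definition), so it raises no superficial suspicion. By construction $M^D(X) \neq M^*(X)$ whenever $\pi(M^*(X)) \neq M^*(X)$, which happens for a nonempty subset of $S^D$; this gives the required non-truthful decisions.

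Next I would argue non-detectability by a symmetry/indistinguishability argument. A detector is any function of the observed tuples $\{(X, M^D(X), H^D(X))\}_{X \in S^D}$ (together with whatever the explainee knows about the behavior of $H^*$). Consider a hypothetical alternative world in which the semantic roles of the labels are themselves permuted by $\pi$: in that world the model $M^D$ is truthful and the accompanying explanations are truthful. At the level of uninterpreted label symbols, the two worlds produce identical tuples, so any detector that flags the deceptive world as non-truthful must, by invariance under relabeling of the output alphabet, also flag the truthful world and thus raise a false alarm; conversely any detector that passes the truthful world also passes the deceptive one. Hence no detector operating on the allowed observations can correctly identify $M^D(X) \neq M^*(X)$.

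The main obstacle is not the construction but delineating exactly what the detector may use. If ground-truth labels drawn from $\mathcal{D}$, semantic anchors on the label set, or any external grounding of the class symbols are available, the permutation symmetry breaks and the argument collapses; indeed, this is precisely the role of ``domain knowledge'' emphasized in the preceding paragraph. I would therefore state explicitly in the setup that the detector has access only to the reported tuples and to the input-output behavior of $H^*$ as a function of a model and a class label, after which the symmetry step is immediate and the theorem follows.
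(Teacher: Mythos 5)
Your proposal is correct and takes essentially the same approach as the paper: you construct a label-permuted deceiver reporting the reference explanation for the permuted class, and show indistinguishability from a hypothetical truthful world in which the label semantics are themselves permuted (the paper realizes this world concretely as a second model $M'^D$ trained on the label-swapped dataset $\{(X,1-Y)\}$). Your version is marginally more general (an arbitrary permutation rather than the binary flip) and more explicit about what the detector may observe, but the core indistinguishability argument is the same.
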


\begin{proof}
Consider a model $M^D$ for dataset $\{(X,Y)\}$ for binary classification with labels $Y\in \{0,1\}$ and $M^D(X)=M^*(X)$. Assume a deceiver switches the decision of model $M^D$, i.e. it returns $M^D(X)= 1- M^*(X)$ and $H^*(X,M^D(X),M^D)$. Consider a dataset with switched labels, i.e. $\{(X,1-Y)\}$ and a second model $M'^D$ that is identical to $M^D$ except that it outputs $M'^D(X)=1-Y=1-M^D(X)$. Thus, reference explanations are identical, i.e. we have $H^*(X,M'^D(X),M'^D)= H^*(X,M^D(X),M^D)$. Thus, for input $X$ both the deceiver and model $M'^D$ report $M^D(X)=1-M^D(X)$ and $H^*(X,M^D(X),M^D)$. Therefore, $M'^D$ and $M^D$ cannot be distinguished by any detector. 
\end{proof}

 A similar theorem might be stated for non-truthful explanations $H\neq H^*$, eg. by using feature inversion $H(X)=-H^*(X)$.


The following theorem states that one cannot hide that a feature (value) is influential if the exchange of the value with another value leads to a change in decision. 

\begin{theorem} \label{thm:omit}
Omission of at least one feature value $v \in \mathcal{V}$ can be detected, if there are instances $X, X' \in S$ with decisions $M^D(X)\neq M^D(X')$ and $X'=X$ except for one feature $j$ with $x_j,x'_j \in \mathcal{V}$ and $x'_j\neq x_j$. 
\end{theorem}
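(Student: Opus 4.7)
The plan is to exhibit an observable contradiction between the reported decisions and the deceptive explanations for the very pair $X, X'$ supplied by the hypothesis. First I would unwind Definition~\ref{def:omit}: since $x_j \in \mathcal{V}$ and $x'_j \in \mathcal{V}$, the omission rule forces $H_{Omit}(X)_j = 0$ and $H_{Omit}(X')_j = 0$ regardless of what the reference explanation $H^*$ actually assigned to feature $j$. For every other coordinate $i \neq j$, the two inputs agree by construction, so the deceptive explanations apply the same rule to the same value at those coordinates.

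Next I would invoke the decision discrepancy. Because $M^D(X) \neq M^D(X')$ while $X$ and $X'$ differ only in the single coordinate $j$, feature $j$ is causally responsible for the flip in the model's output. Under the paper's convention that $H_i = 0$ denotes no dependence of the decision on feature $i$, any truthful reference explanation of $M^D$ must assign a nonzero relevance to feature $j$ in at least one of the two inputs, i.e. $H^*(X)_j \neq 0$ or $H^*(X')_j \neq 0$. A detector that inspects the pair $\bigl(X, M^D(X), H^D(X)\bigr)$ and $\bigl(X', M^D(X'), H^D(X')\bigr)$ can therefore check whether both reported explanations assign zero to the single varying coordinate despite the decisions differing; if so, it certifies that an omission has occurred, and in particular that at least one of $x_j, x'_j$ lies in $\mathcal{V}$, which is exactly what the theorem requires.

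The main obstacle I anticipate is making the notion of a truthful reference explanation precise enough for this contradiction to be airtight: the argument relies on the mild semantic stipulation (already part of the paper's setup) that a zero relevance score expresses independence of the decision from that feature's value, so that a single-coordinate perturbation which flips the decision cannot be consistent with zero relevance in both explanations. Granted this, the rest is essentially a pigeonhole on the single differing coordinate. A small subtlety worth flagging in the write-up is that with only this pair the detector cannot in general distinguish which of $x_j$ or $x'_j$ is the omitted value, but the theorem only asks for detectability of the omission itself, which is sufficient.
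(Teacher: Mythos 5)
Your proposal is correct and follows essentially the same constructive route as the paper: both arguments observe that under Definition~\ref{def:omit} the single differing coordinate $j$ must receive relevance $0$ in both explanations, while the decision flip between $X$ and $X'$ can only be attributed to that coordinate, yielding an observable inconsistency. Your write-up is if anything slightly more careful than the paper's, since you make explicit the semantic stipulation that zero relevance means no dependence, which the paper leaves implicit.
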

\begin{proof} 
We provide a constructive argument. We can obtain for each input $X \in S$, the prediction $M^D(X)$ and explanation $H^D(X)$. By Definition of Omission, if feature values $\mathcal{V}$ are omitted it must hold $H^D(X)_i=0$ for all $(i,X)\in \mathcal{F}_{S,v}$ and $v \in \mathcal{V}$. Omission occurred if this is violated or there are $X, X' \in S$ that differ only in the value $x_j \in \mathcal{V}$ for feature $j$ and $M^D(X)\neq M^D(X')$. The latter holds because the change in decision must be attributed to the fact that of $x_j\neq x'_j$, since $X$ and $X'$ are identical except for feature $j$ with values that are deemed omitted.
\end{proof}

Theorem \ref{thm:omit} is constructive, meaning that it can easily be translated into an algorithm by checking all inputs $S$ if the stated condition is matched. But, generally all inputs $S$ cannot be evaluated due to computational costs. Furthermore, the existence of inputs $X, X' \in S$ that only differ in a specific feature is not guaranteed. However, from a practical perspective, it becomes apparent that data collection helps in detection, i.e. one is more likely to identify "contradictory" samples $X,X'$ in a subset $S' \subset S$ the larger $S'$ is.

\paragraph{Detection Approaches} 
Our formal analysis showed that only decisions and explanations are not sufficient to detect deception involving flipped classes. That is, some knowledge on the domain is needed. Encoding domain know-how with a labeled dataset seems preferable to using expert rules or the-like. Thus, not surprisingly, this approach is common in the literature, e.g. for fake news detection \cite{per17,prz20}. To train a detector, each sample is a triple ($X$, $M^D(X)$, $H^{D}(X)$) for $X \in S^T$ together with label $L \in \{TT, FT, TF,FF\}$ stating the scenario in Figure~\ref{fig:scenarios}. After the training, the classifier can be applied to the explanations and decisions of $X \in S^D$ to investigate. We develop classifiers maximizing deception detection accuracy. 

Labeling data might be difficult since it requires not only domain knowledge on the application but also knowledge on ML, ie. the reference model and explainability method. Thus, we also propose unsupervised approaches to identify whether a model, ie. its explanations, are truthful to the model decision. That is, the goal is to assess if given explanations $H^D$ are true to the model $M^D(X)$ or not.

\begin{algorithm}
\caption{ConsistencyChecker} \label{alg:gui}
\begin{algorithmic}
\STATE \textbf{Input}: Untrained models $\mathcal{M}'$, reference method $H^*$, inputs $S^D$ with (deceptive) decisions and explanations $\{(M^D(X),H^D(X)\}$
\STATE \textbf{Output}: (Outlier) Probability $p$
\STATE $S^{M'}= s$ randomly chosen elements from $S^D$ with $s$ random in $[c_0|S^D|,|S^D|]$ \COMMENT{\emph{\small{We used: $c_0=0.33$}}}
\STATE \small{Train each model $M' \in \mathcal{M}'$ on $(X,M^D(X))$ for $X \in S^{M'}$}
\STATE $m^*_i(X)=\frac{1}{|\mathcal{M}'|} \sum_{M' \in \mathcal{M}'} H_i^*(X,M^D(X),M')$
\STATE $s(M') = $\tiny $\dfrac{ \sum_{i \in[0,n-1],X \in S^D} (H_i^*(X,M^D(X),M')-m_i(X))^2}{n|S^D|}$
\STATE \small $s(M^D)=\dfrac{ \sum_{i \in[0,n-1],X \in S^D} (H_i^D(X)-m_i(X))^2}{n|S^D|}$
\STATE $\mu = \frac{1}{|\mathcal{M}'|}\sum_{M' \in \mathcal{M}'} s(M')$ 
\STATE $\sigma = \frac{1}{|\mathcal{M}'|}\sqrt{\sum_{M' \in \mathcal{M}'} (s(M')-\mu)^2}$
\STATE $p=prob\big(T>|s(M^D)-\mu|\text{\phantom{a}} \big|\text{\phantom{a}} T \sim \mathcal{N}(0,\sigma)\big)$
\end{algorithmic}
\end{algorithm}

Our approach is to check, whether the explanations of $H^D$ and decisions of $M^D$ are consistent. This would be easy, if the model $M^D$ was available, ie. we would check if $H^*(X,M^D(X),M^D)=H^D(X)$. Since it is not, we aim to use a model $M'$ to approximate model $M^D$ and compare the explanations $H^*$ of $M'$ with $H^D$. Since approximation introduces an error, we must determine if differences in the explanations originate from model approximation or from deception. To do so, we train (approximate) reference models $M' \approx M^D$  with $M' \in \mathcal{M}'$ using the provided data $(X,M^D(X))$ with $X \in S^D$. The models might differ, e.g. in hyperparameter settings. If explanations of $M^D$ are "outliers" relative to those of models $ M' \in \mathcal{M}' $ explanations might be untruthful and further investigation is needed. Otherwise, any variation of explanation compared to $H^*$ due to deception is comparable to variation that might arise if models are slightly varied but no deception is attempted.\\
Details on the model $M^D$ are unknown. But for many common ML tasks excellent performing architectures are publicly known. Therefore, we assume that we can anticipate at least a well-working architecture for the problem -- the closer it is to $M^D$ the better.\\
Algorithm \ref{alg:gui} (\emph{ConsistencyChecker})  shows details. It takes as input a set of untrained models $ M' \in \mathcal{M}' $ -- ideally  similar to $M^D$. The more information is known on $M^D$, the more similar the models can be chosen. If only very little information or no information on $M^D$ is available, one might train models on $(X,M^D(X))$ for $X \in S^D$ and choose those that perform best. 

The second unsupervised approach Algorithm \ref{alg:uch} (\emph{UsefulnessChecker}) tries to reproduce the decisions based on explanations. If this is not possible for explanations from $M^D$ but for those of approximate models $M^D$ then explanations from $M^D$ are likely deceptive. As before, we train multiple approximate models $ M' \in \mathcal{M}' $. For each of these models $M'$, we train a classifier $C^{M'}$  using explanations from the approximate models $\mathcal{M'}$ as well as one on explanations from $M^D$. We use the same classifier architecture for all. We conduct a statistical test (as in Algorithm \ref{alg:gui}), if accuracy is an outlier. The full pseudo-code is similar to Algorithm \ref{alg:gui}. For the sake of completeness, it is shown in Algorithm \ref{alg:uch}.

\begin{algorithm}
\caption{UsefulnessChecker} \label{alg:uch}
\begin{algorithmic}
\STATE \textbf{Input}: Untrained models $\mathcal{M}'$, reference method $H^*$, inputs $S^D$ with (deceptive) decisions and expl. $\{(M^D(X),H^D(X)\}$, untrained classifier model $C$
\STATE \textbf{Output}: (Outlier) Probability $p$
\STATE $S^{M'}= s$ randomly chosen elements from $S^D$ with $s$ random in $[c_0|S^D|,|S^D|]$ \COMMENT{We used: $c_0:=0.33$}
\STATE Train each model $M' \in \mathcal{M}'$ on $(X,M^D(X)$ for $X \in S^{M'}$

\STATE $S^{T}= $ random subset of $S^D$ of size $c_1|S^D|$ \COMMENT{We used: $c_1:=0.8$}
\STATE $C^{M'}=$ trained classifier model $C$ on $\big(H^*(X,M^D(X),M'),M^D(X)\big)$ for $X \in S^{T}$ and $M' \in \mathcal{M'}$ \STATE $C^{M^D}=$ trained classifier model $C$ on $\big(H^{D}(X,M^D(X),M^D),M^D(X)\big)$ for $X \in S^T$ 
\STATE $Acc(C^{M}):=$ Accuracy of classifier $C^M$ using $X \in S^D \setminus S^{T}$
\STATE $\mu = \frac{1}{|\mathcal{M}'|}\sum_{M' \in \mathcal{M}'} Acc(C^{M'})$ 
\STATE $\sigma = \frac{1}{|\mathcal{M}'|}\sqrt{\sum_{M' \in \mathcal{M}'} (Acc(C^{M'})-\mu)^2}$
\STATE $p=prob\big(T>|Acc(C^{M^D})-\mu|\text{\phantom{a}} \big|\text{\phantom{a}} T \sim \mathcal{N}(0,\sigma)\big)$
\end{algorithmic}
\end{algorithm}

\section{\uppercase{Evaluation}}
We elaborate on two text classification tasks using a convolutional neural network (CNN) for text classification by \cite{kim-2014-convolutional} as our reference model $M^*$ and GradCAM~\cite{selvaraju2017grad} for generating reference explanations $H^*$. The CNN is well-established, conceptually simple and works reasonably well. GradCAM was one of the methods said to have passed elementary sanity checks that many other methods did not~\cite{adebayo2018sanity}. While GradCAM is most commonly employed for CNN on image recognition the mechanisms for texts are identical. In fact, \cite{lertvittayakumjorn2019human} showed that GradCAM on CNNs similar to the one by \cite{kim-2014-convolutional} leads to outcomes on human tasks that are comparable to other explanation methods such as LIME. The GradCAM method, which serves as reference explanation $H^*$, computes a gradient-weighted activation map starting from a given layer or neuron within that layer back to the input $X$. We apply the reference explanation method $H^*$, ie. GradCAM, on the neuron before the softmax layer that represents the class $Y'$ to explain. For generating a high fidelity explanation for an incorrectly reported prediction $M^D(X)\neq M^*(X)$ (scenario FT in Figure~\ref{fig:scenarios}) we provide as explanation the reference explanation, i.e. $H^D(X)=H^*(X,M^D(X),M^D)$. By definition reference explanations maximize explanation fidelity $f_O$.

\begin{figure}
  \centering
  \includegraphics[width=1.03\linewidth]{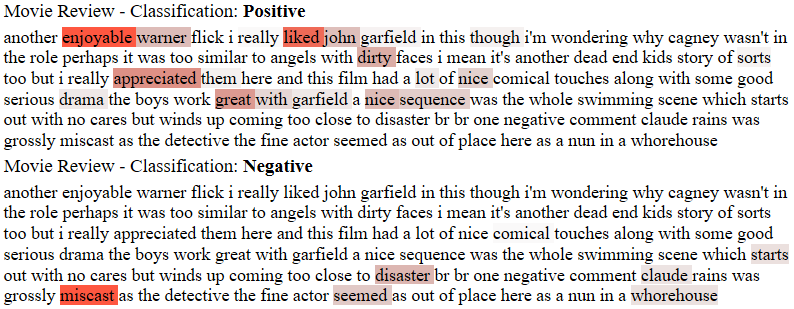}
  \caption{Generated sample explanations for scenarios TT (top) and FT (bottom) from Figure~\ref{fig:scenarios}}
  \label{fig:gradcamsample}
\end{figure}
 
\paragraph{\textbf{Setup:}} 
We employed two datasets. The IMDB dataset~\cite{imdb} consists of movie reviews and a label indicating positive or negative sentiment polarity. We also utilized the Web of Science (WoS) dataset consisting of abstracts of scientific papers classified into 7 categories~\cite{wos}. Our CNNs for classification achieved accuracies of 87\% for IMDB and 75\% for WoS trained  with 2/3 of the samples for training and 1/3 for testing. We computed explanations for test data only. For deception using omission, we removed a randomly chosen set of words $V$ (see Def. \ref{def:omit}), such that their overall contribution to all explanations $H^*$ is $k$\% (with a tolerance of $0.01k$ \%). The contribution of a word $v$ is given by $\sum_{(i,X) \in \mathcal{F}(v,S)} H^*(X,M^*(X))_i$. For explanation distortion parameter $k$ (see Definitions \ref{def:omit} and \ref{def:noi}) we state values for each experiment.\\

\paragraph{\textbf{ML based Detection:}} 
As detector models, we used CNN models.  
For supervised learning, the model input is a concatenation of three vectors: i) a text vector of word indices, ii) a heatmap vector of values obtained via GradCAM, that is a 1:1 mapping of the visual output shown to the user, and iii) a one-hot prediction vector of the decision. Our "simple" CNN detector, i.e. classifier, is designed as follows: we perform an embedding, concatenate the heatmap vector with the word embedding before doing a 1D convolution. Then we concatenate the one-hot prediction vector and use two dense layers. The more "complex" CNN adds six more conv1D layers: two processing the embedding, two on the heatmap vector, and two after the first concatenation. We used dropout for regularization. Since labeling is difficult and potentially error-prone, we consider different levels of label noise, i.e., $L \in [0,0.32]$, such that a fraction $L$ of all labels were replaced with a random label (different from the correct one). For the detection experiment, we chose samples that were predicted correctly by the truthful model.  
For unsupervised learning, we train 35 classifiers $ M' \in \mathcal{M}' $ being variations of a CNN network \cite{kim-2014-convolutional}, i.e., each of the following hyperparameters was chosen uniformly at random for each classifier $M'$:  embedding dimension $\{32,64, 128\}$; 1-3 linear layers; 2-6 conv layers for the Kim network  with varying number of filters. We also varied the training sets in terms of size and elements, ie. we trained a model with a subset of $\mathcal{T}$ of size 33, 50 and 100\%. Any model was trained using the Adam optimizer for 100 epochs. Train/Test data split was 80/20 for all detector models.

Classifiers learning from (deceptive) explanations as done in our unsupervised approach \emph{UsefulnessChecker} tend sometimes to focus on raw inputs $X$ and disregard explanation relevance scores $H^D_i(X)$. That is, they often work well and show little variation in accuracy despite large variations in explanations.  To avoid this, we convolve also an inner representation of the network with explanation values enforcing stronger entanglement. That is, in the \emph{UsefulnessChecker} model the output of the word embedding of the input is convolved with the explanations as follows: First, we perform a low-dimensional embedding (just one dimensional) and multiply the embedding values with the explanation values and add explanation values on top. This is then fed into 3 Conv1D layers followed by two dense layers. 

\paragraph{\textbf{Human-based Detection:}} 
We conducted a user study using the IMDB dataset.\footnote{The WoS dataset seems less suited since it uses expert terminology that is often not held by the general public from which participants originate as found in~\cite{lertvittayakumjorn2019human}.} For the scenarios of interest, we compare explanations that are aligned to the shown prediction, i.e. TT and FT. Two samples are shown in Figure~\ref{fig:gradcamsample}. We recruited a total of 200 participants on Amazon Mechanical Turk from the US having at least a high-school degree. We presented each participant with 25 predictions together with explanations. They had to answer "Is the classification correct?" on a scale of five ranging from strongly disagree to  strongly agree. We randomized the choice of presented samples, i.e. we randomly chose a sample of the dataset and we randomly chose between scenarios TT and FT in Figure~\ref{fig:scenarios}.

\section{\uppercase{Results}}

\paragraph{\textbf{Human-based Detection:}} 
Out of the 200 participants, we removed participants that spend less than 5 seconds per question, since we deemed this time too short to provide a reasonable answer. We also filtered out participants who always gave the same answer for all 25 questions. This left 140 participants amounting to 3,500 answers. Demographics and answer distributions are in Figure \ref{fig:plotuser1} and Table \ref{tab:samp_demo}. 

A t-test of means confirmed that the distributions differ significantly (p-value of 0.008), though the mean scores for "agreeing" of  3.74(TT) and 3.58(FT) show that in absolute terms differences are minor. This implies that while the majority of humans might be fooled oftentimes, they have the capability to collectively detect deceptive explanations.

\begin{table}[!t]
\scriptsize
\centering
\begin{tabular}{|l|l|l|}
\hline
\multicolumn{1}{|c}{Variable} & \multicolumn{1}{|c|}{Value} & \multicolumn{1}{c|}{Percentage }\\
\hline
Gender & Male & 66\%\\
 & Female & 34\%\\ \hline
Age & $\leq$ 25 years of age & 18\%\\
 & from 26 to 40 years of age & 62\%\\
 & from 41 to 65 years of age & 18\%\\
 & $>$ 65 years of age & 2\%\\ \hline
Education & High School & 16\%\\
 & Associate Degree & 11\%\\
 & Bachelor's Degree & 56\%\\
 & Master's Degree & 16\%\\
 & Doctoral Degree & 1\%\\
\hline
\end{tabular}
\caption{Participants Demographics with $n=140$ participants} \label{tab:samp_demo}
\end{table}

\begin{figure}
  \centering
  \includegraphics[width=0.95\linewidth]{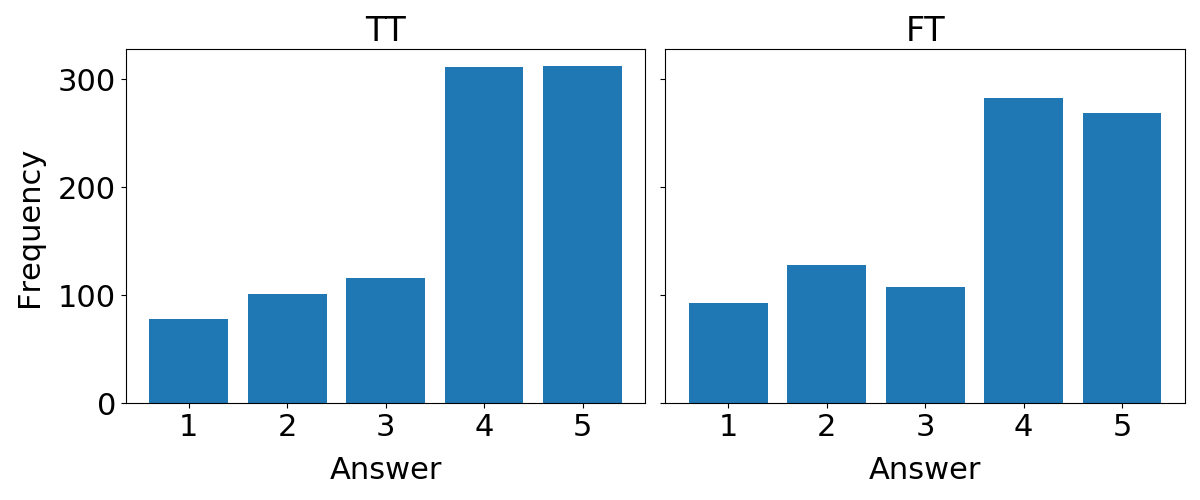}
  \caption{Distributions of user replies to ``The classification is correct'' \scriptsize{(1 = strongly disagree to 5 = strongly agree)}.}
  \label{fig:plotuser1}
\end{figure}

\begin{figure*}[!htbp]
  \centering
  \includegraphics[width=1.05\linewidth]{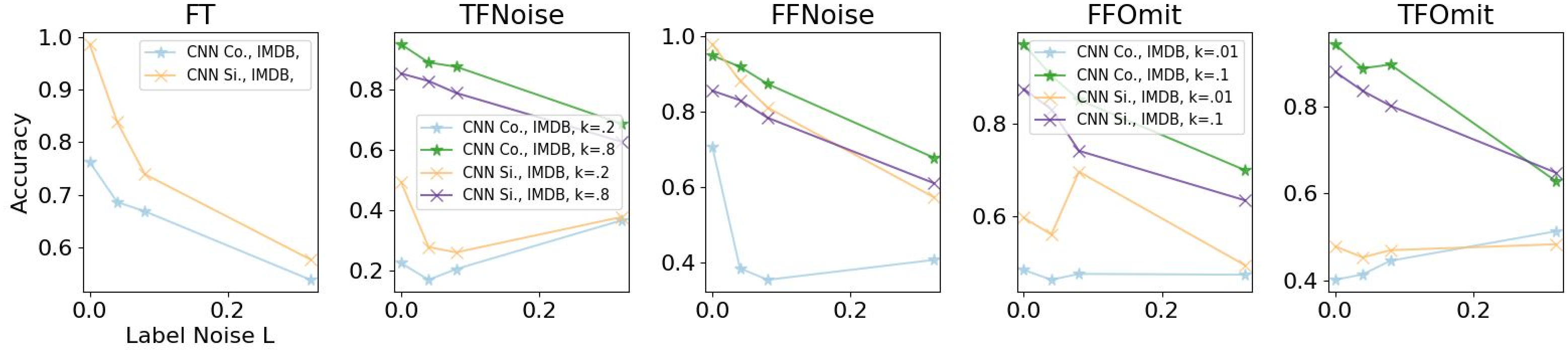}
  \caption{Supervised detection results for IMDB for scenarios in Figure \ref{fig:scenarios}.}
  \label{fig:plotmachine}
\end{figure*}

\begin{figure*}[!htbp]
  \centering
  \includegraphics[width=1.05\linewidth]{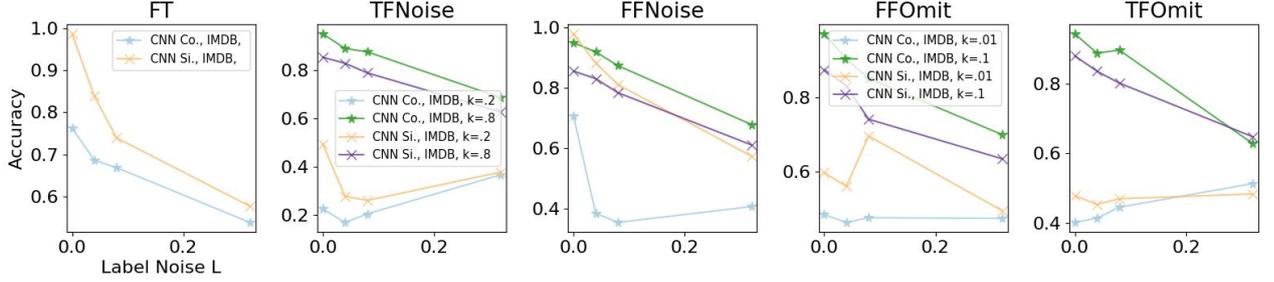}
  \caption{ML-based supervised detection results for WoS for scenarios in Figure \ref{fig:scenarios}.}
  \label{fig:plotmachine2}
\end{figure*}

\paragraph{\textbf{Supervised ML-based Detection }}

Figure \ref{fig:plotmachine} shows results for IMDB, those for WoS are similar, shown in Figure \ref{fig:plotmachine2}. The plot shows that for very minor deception (small $k$) detection is more difficult. Accuracy rarely tops 60\%, which is fairly low given that 50\% is the baseline for guessing. Accuracy is also more volatile given label noise. Beyond some threshold under all scenarios, accuracy exceeds 90\% for correctly deciding whether a decision and/or explanation are deceptive. Accuracy worsens with greater label noise, though it is often better than guessing even for a noise level of 32\%. Mostly the complex CNN (with more layers) outperforms the simple one with one exception: the scenario where the explanation is always truthful to the model, but the decision is not. Here, a simple function allows distinguishing deceptive (FT) from the non-deceptive scenario (TT). We observed that for the scenario with noise on non-truthful explanations for correct decisions (TFNoise) small noise levels can lead to very low test accuracies. Training accuracies were larger, but mostly close to random guessing, indicating that the network fails to learn proper conditions what values constitute a noisy relevance score and which do not.


\paragraph{\textbf{Unsupervised ML-based Detection }}
For a single model, a small probability indicates that it is unlikely to witness such an (extreme) outcome, i.e. accuracy or deviation from the average explanation. Thus, a small probability such as 0.05 indicates a high likelihood of deception. Figure \ref{fig:unsup} shows the average as well as maximum probability of all 35 models.  Detection ability follows an S-curve, meaning that for deception with parameterized strength $k$ above a threshold detection works fairly well, whereas it fails otherwise.  Required noise levels are somewhat higher for the WoS dataset, as shown in Figure \ref{fig:plotmachine2} but qualitatively, results are identical. The trained classifier performs worse on WoS than IMDB, since WoS is a more complex dataset. Classifiers vary (relatively) more in accuracy, which in turn also leads to more variance in explanations. Looking at the mean and max indicates that for multiplicative noise (TFNoise) all deceptive models are reliably detected for $k=0.8$, while below that value at least for the most difficult to detect model it fails to do so.  For $k<0.8$, the detection methods are still valuable to identify suspicious models, meaning that such models exhibit lower probability, but are not low enough to be certain. The same reasoning also applies to TFOmit, though here a strong difference between methods is apparent. The \emph{ConsistencyChecker} yields much better outcomes, highlighting that even small omissions can be detected reliably. It shows that statistical analysis is preferable to using a downstream task. Our models $\mathcal{M'}$ are very diverse, ie. models differ by a factor of 3 in terms of training data and number of layers, as well as in neurons/filters per layer. We found that reducing (or increasing) the diversity has a profound impact on results, as shown in Figures \ref{fig:unsuperData} and \ref{fig:unsuperHyper}. 

\begin{figure}[!htbp]
  \centering
  \includegraphics[width=1.04\linewidth]{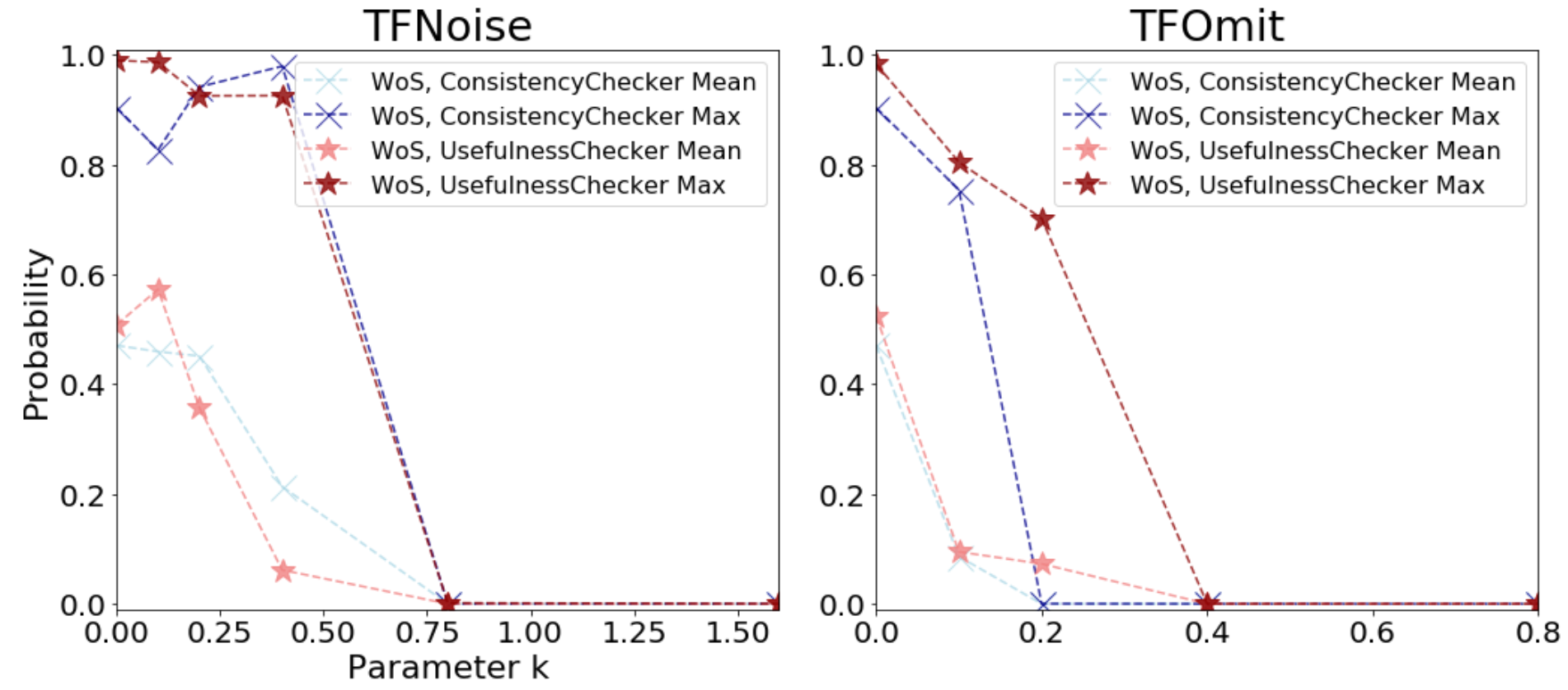}
  \caption{Unsupervised detection results for WoS where approximate models vary only in training data (but have the same hyperparameters) }
  \label{fig:unsuperData}
\end{figure}

\begin{figure}[!htbp]
  \centering
  \includegraphics[width=1.04\linewidth]{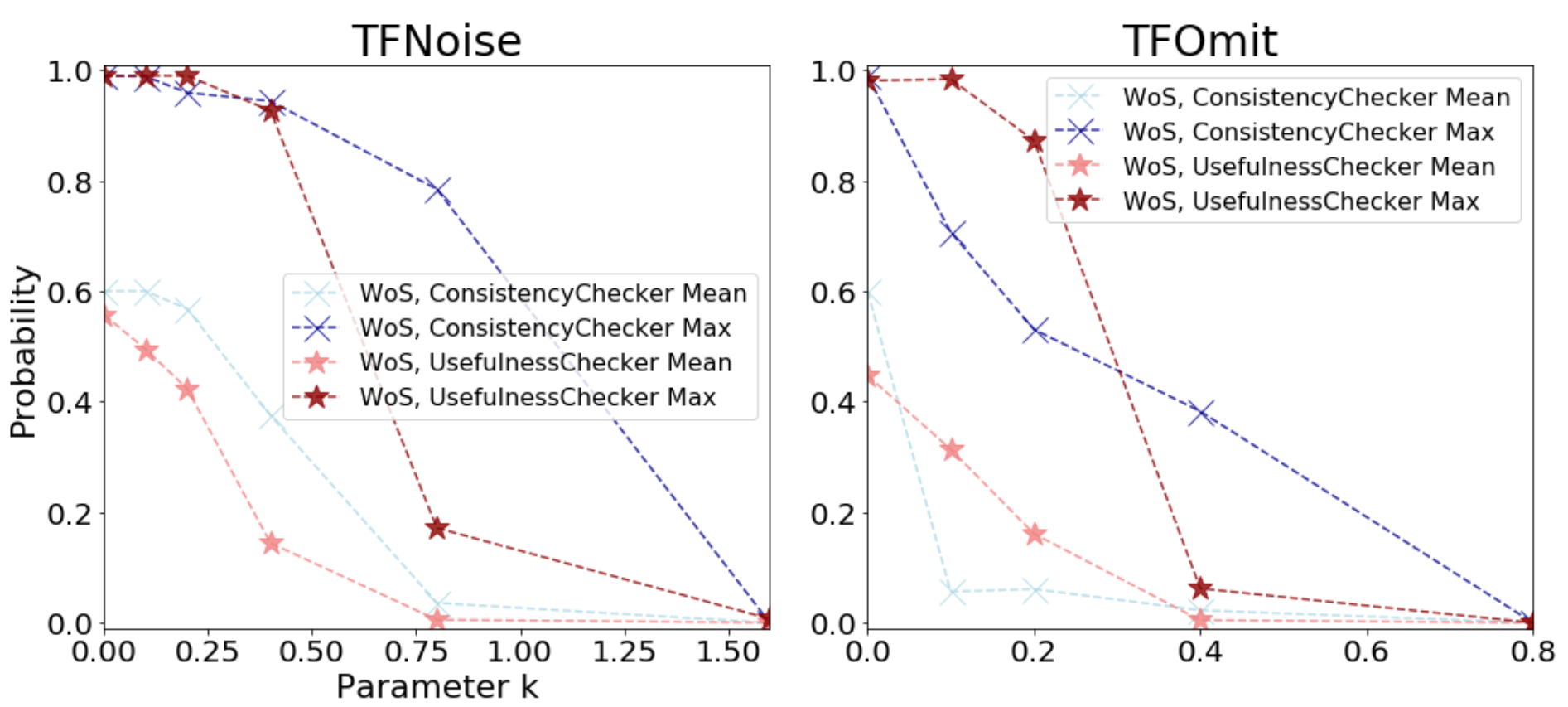}
  \caption{Unsupervised detection results for WoS where approximate models vary in training data and hyperparameters. Detection is more difficult compared to varying training data only (Figure \ref{fig:unsuperData}) }
  \label{fig:unsuperHyper}
\end{figure}
\section{\uppercase{Difficulty of Deception Detection}}
We provide intuition for Algorithm \emph{ConsistencyChecker} discussing the difficulty of detection depending on noise models and deception strategy. To compute the probability, we rely on values $s(M)$ as defined in Algorithm \ref{alg:gui}. We are interested in the gap $G(H_i,m_i(X)):=E[(H_i-m_i(X))^2]$ between the mean and the relevance score in the explanation of a feature $i$. For multiplicative noise we have $H_i^D(X,M^D(X),M)=(1+U)\cdot H_i^*(X,M^D(X),M^D)$, where $U$ is uniformly chosen at random from $[-k,k]$. We shall use $a_i:=H_i^*(X,M^D(X),M^D)$ and $m_i:=m_i(X)$ for ease of notation.  We expect that the deviation for a deceptive explanation and the mean is:
\begin{footnotesize}
\begin{align}
G((1+U)a_i,m_i)= & E[((1+U)a_i-m_i)^2]  \nonumber\\
        &= E[(a_i-m_i)^2-2Ua_im_i-U^2a_i^2] \nonumber \\
                &= (a_i-m_i)^2+a_i^2k^2/3\nonumber
\end{align}
\end{footnotesize}
The overall deviation $s(M)$ for a model is just the mean across all features $i$ and inputs $X$. Detection is difficult when 
\begin{footnotesize}
\begin{align}
\sum_{i,X} (a_i-m_i)^2\gg \sum_{i,X} a^2k^2/3 \label{eq:int}
\end{align}
\end{footnotesize}
Put in words, detection is difficult, when  the distortion due to deception (right-hand side term in Equation \ref{eq:int}) is small compared to the one due to model variations $\mathcal{M'}$ (left hand side term in Equation \ref{eq:int}). The closer $a_i$ and $m_i$  are and the larger $k$, the easier detection. 
For omission we get that if feature $i$ is omitted then $G(a_i,m_i)=m_i^2$ and $G(a_i,m_i)=(a_i-m_i)^2$. Assume a set $F^D$ of features is omitted, where the size of $F^D$ depends on the parameter $k$. We get that deception is difficult if  $\sum_i (a_i-m_i)^2 \gg \sum_{i \in F^D} m_i^2 + \sum_{i \notin F^D} (a_i-m_i)^2$. Clearly, the larger $F^D$ the easier detection. Say we omit features with $m_i=a_i$ and we are given the choice of omitting two features with mean $m$ or one with mean $2m$. The latter is easier to detect since means are squared, ie. $m^2+m^2=2m^2 < (2m)^2=4m^2$. Therefore, it is easier to detect few highly relevant omitted features than many irrelevant ones.


\begin{figure}[!htbp]
  \centering
  \includegraphics[width=1.04\linewidth]{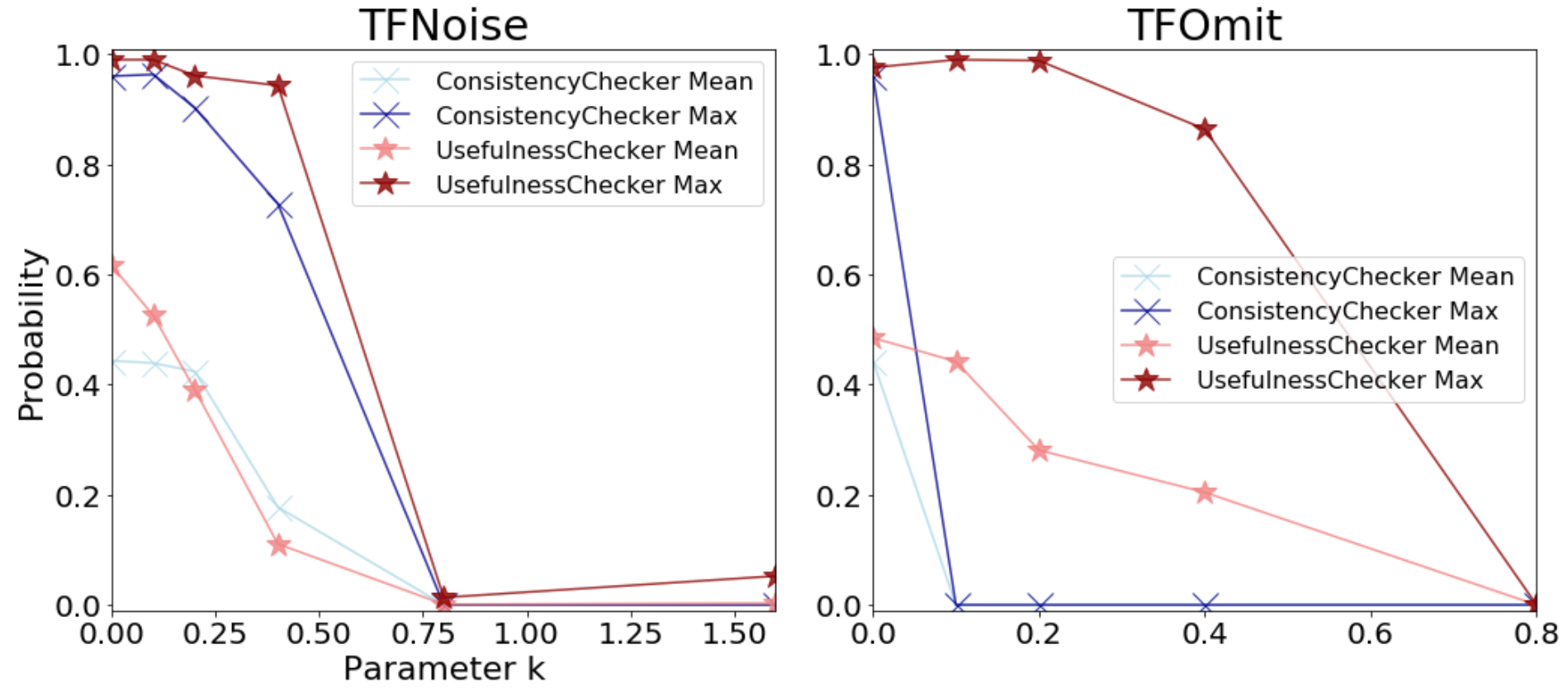}
  \caption{Unsupervised detection results for IMDB.}  \label{fig:unsup}
\end{figure}

\section{\uppercase{Related Work}} \label{sec:typ} 
\cite{sla20} showed how arbitrary explanations for methods relying on perturbations can be generated for instances by training a classifier with adversarial inputs. \cite{dima20} trains a classifier using an explainability loss term for a feature that should be masked in explanations. \cite{fuk20} showed that biases in decision-making are difficult to detect in an input-output dataset of a biased model if the inputs were sampled in a way to disguise the detector. \cite{lai19} used ML (including explanations) to support detection of deceptive content. The explanations were non-deceptive.

\cite{viering2019manipulate} are interested in manipulating the inner workings of a deep learning network to output arbitrary explanations. Whether the explanations themselves are convincing or not, is not considered, i.e., the paper shows many examples of "incredible" explanations that can easily be detected as non-genuine. \cite{aiv19} focus on manipulating reported fairness based on a regularized rule list enumeration algorithm. \cite{lakkaraju2019fool} investigated the effectiveness of misleading explanations to manipulate users' trust. Decisions were made using prohibited features such as gender and race but misleading explanations were supposed to disguise their usage. Both studies~\cite{aiv19,lakkaraju2019fool} found that users can be manipulated into trusting high fidelity but misleading explanations for correct predictions. In contrast, we do not generate fake reviews but only generate misleading justifications for review classifications and provide detection methods and some formal analysis. 

Inspiration for detecting deceptive explanations might be drawn from methods used for evaluating the quality of explanations \cite{mohs21}. In our setup, quality is a relative notion compared to an existing explainability method and not to a (human) gold standard. 
Papenmeier et al.~\cite{papenmeier2019model} investigated the influence of classifier accuracy and explanation fidelity on user trust. They found that accuracy is more relevant for trust than explanation quality though both matter. 

~\cite{nourani2019effects} investigated the impact of explanations on trust. Poor explanations indeed reduce a user's perceived accuracy of the model, independent of its actual accuracy. Explanations' helpfulness varies depending on task and method~\cite{lertvittayakumjorn2019human}. Explanations are more helpful in assessing a model's predictions compared to its behavior. Some methods support some tasks better than others. For instance, LIME provides the most class discriminating evidence, while the layer-wise relevance propagation (LRP) method~\cite{bach2015pixel} helps assess uncertain predictions.

~\cite{adelani2019generating} showed how to create and detect fake online reviews of a pre-specified sentiment. In contrast, we do not generate fake reviews but only generate misleading justifications for review classifications. Fake news detection has also been studied\cite{per17,prz20} based on ML methods and linguistic features obtained through dictionaries. \cite{per17,prz20} use a labeled data set. Linguistic cues~\cite{lud16} such as flattery was used to detect deception in e-mail communication. We do not encode explicit, domain-specific detection features such as flattery.

Our methods might be valuable for the detection of fairness and bias -- see \cite{meh19} for a recent overview. There are attempts to prevent ML techniques from making decisions based on certain attributes in the data, such as gender or race~\cite{ross2017right} or to detect learnt biases based on representations~\cite{zhang2018examining} or perturbation analysis for social associations~\cite{prab19}. In our case, direct access to the decision-making system is not possible –- neither during training nor during operations, but we utilize explanations. 


In human-to-human interaction, behavioral cues such as response times~\cite{lev14} or non-verbal leakage due to facial expressions~\cite{ekman1969nonverbal} might have some, but arguably limited impact~\cite{masip2017deception} on deception detection. In our context, this might pertain, e.g., to computation time. We do not use such information. Explanations to support deceptions typically suffer from at least one fallacy such as "the use of invalid or otherwise faulty reasoning" ~\cite{van2009fallacies}. Humans can use numerous techniques to attack fallacies~\cite{damer2012attacking}, often based on logical reasoning. Such techniques might also be valuable in our context. In particular, ML techniques have been used to detect lies in human interaction, eg. \cite{aro18}.

\section{\uppercase{Discussion}} 
Explanations provide new opportunities for deception (Figure \ref{fig:scenarios}) that are expected to rise since AI is becoming more pervasive, more creative \cite{schne22}, personalized \cite{schne21pers}. Deceptive explanations might aim at disguising the actual decision process, e.g., in case it is non-ethical, or make an altered prediction appear more credible. While faithfulness of explanations can be clearly articulated mathematically using our proposed decision and explanation fidelity measures, determining when an explanation is deceptive, is not always as clear, since it includes a grey area. That is, an explanation might be said to be deceptive, but it might also only be judged as inaccurate or simplified. Thus, deception detection is not an easy task: While strong deception is well-recognizable, minor forms are difficult to detect. Furthermore, some form of domain or model knowledge is necessary. This could be data similar or, preferably, identical to the model's training data under investigation. Domain experts could also provide information in the form of labeled samples or detection rules, i.e., they can investigate model outputs and judge them as faithful or deceptive. Identifying deceptive explanations becomes much easier if model access and training or testing data are available, i.e., it reduces to comparing outputs from models to those suggested by the (training) data. We recommend regulatory bodies to pass laws that ensure that auditors have actual model access since this simplifies the process of deception detection. It is one step towards ensuring that AI is used for the social good. 

Detection methods will improve, but so will strategies for lying. Thus, it is important to anticipate weaknesses of detection algorithms that deceitful parties might exploit, and mitigate them early on, e.g., with the aid of generic security methods \cite{schl15}.
The field of explainability evolves quickly with many challenges ahead \cite{mesk21}. This provides ample opportunities for future research to assess methods for creation and detection of deceptive explanations, e.g., methods explaining features or layers of image processing systems rather than text \cite{schn21}. 

\section{\uppercase{Conclusion}} 
Given economic and other incentives, a new cat and mouse game between "liars" and "detectors" is emerging in the context of AI. Our work provided a first move in this game: We structured the problem, and contributed by showing that detection of deception attempts without domain knowledge is challenging. Our ML models utilizing domain knowledge through training data yield good detection accuracy, while unsupervised techniques are only effective for more severe deception attempts or given (detailed) architectural information of the model under investigation.

\bibliographystyle{apalike}
{\small
\bibliography{aimisuse}}

\end{document}